\newtheorem{theorem}{Theorem}[section]
\newtheorem{lemma}[theorem]{Lemma} 
\newtheorem{proposition}[theorem]{Proposition}
\newtheorem{definition}[theorem]{Definition}
\newtheorem{conjecture}[theorem]{Conjecture}
\newtheorem{problem}[theorem]{Problem}
\newcommand{\lam}{\lambda}
\newcommand{\eps}{\varepsilon}
\DeclareMathOperator{\dist}{dist}
\definecolor{RED}{rgb}{1,0,0}\definecolor{BLUE}{rgb}{0,0,1} 
\providecommand{\DIFaddbegin}{} 
\providecommand{\DIFaddend}{} 
\providecommand{\DIFdelbegin}{} 
\providecommand{\DIFdelend}{} 
\begin{document}
\title{Deep Learning and Hierarchical Generative Models} 
\author{ Elchanan Mossel\DIFdelbegin 
\DIFdelend \DIFaddbegin \thanks{Supported by ONR grant N00014-16-1-2227   and 
NSF CCF-1665252 and DMS-1737944 {\tt elmos@mit.edu} } \DIFaddend \\ MIT }


\date{\today}
\maketitle

\begin{abstract}
It is argued that  deep learning is efficient for data that is generated from hierarchal generative models. 
Examples of such generative models include wavelet scattering networks, functions of compositional structure, and deep rendering models. 
Unfortunately so far, for all such models, it is either not rigorously known that they can be learned efficiently, or it is not known 
that ``deep algorithms" are required in order to learn them. 

We propose a simple family of ``generative hierarchal models" which can be efficiently learned and where ``deep" algorithm are necessary for learning. Our definition of ``deep" algorithms is based on the empirical observation that deep nets necessarily use correlations between features. More formally, we show that in a semi-supervised setting,  given access to low-order moments of the labeled data and all of the unlabeled data, it is information theoretically impossible to perform classification while at the same time there is an efficient algorithm, that given all labelled and unlabeled data, perfectly labels all unlabelled data with high probability. 

For the proof, we use and strengthen the fact that Belief Propagation does not admit a good approximation in terms of linear functions. 

\end{abstract}

\maketitle


\newpage

\section{Introduction}
We assume that the reader is familiar with the basic concepts and developments in deep learning. 
We do not attempt to summarize the big body of work studying neural networks and deep learning. 
We refer readers who are unfamiliar with the area to~\cite{GoBeCo:16} and the references within. 

We hypothesize that deep learning is efficient in learning data that is generated from generative hierarchical models.
This hypothesis is in the same spirit of the work of Bruna, Mallat and others who suggested wavelet scattering 
networks~\cite{bruna2013invariant} as the generative model, the work of Mhaskar, Liao and Poggio who suggested compositional functions as the generative model~\cite{mhaskar2016learning} and work by Patel, Nguyen,  and Baraniuk~\cite{patel2015probabilistic} who suggested hierarchical rending models. 
Unfortunately so far, for all previous models, it is either not rigorously known that they can be learned efficiently, or it is not known 
that ``deep algorithms" are required in order to learn them. 

The approach presented in this paper is motivated by connections between deep learning and evolution.
In particular, we focus on simple generative evolutionary models as our generative processes. 
While these models are {\em not} appropriate models for images or language inference, they provide several advantages: 
\begin{itemize}
\item There are well established biological processes of evolution. Thus the generative model studied is not a human-made abstraction but the actual process that generated the data. 
\item The models are mathematically simple enough so we can provide very accurate answers as to the advantage of the depth in a semi-supervised setting.
\end{itemize} 

We further note that some of the most successful applications of deep learning are in labeling objects that are generated in an evolutionary fashion 
such as the identification of animals and breeds from images, see e.g.~\cite{LeBeHi:15} and the reference within. 
Let us consider the problem of identifying species from images. One form of this problem was tackled by Darwin. In his {\em Evolution of Species}, Darwin used phylogenetic trees to summarize the evolution of species~\cite{Darwin:59}. The evolutionary tree, in turn, helps in identifying observed species. 
The problem of species identification took a new twist in the DNA age, where morphological characters of species were replaced by DNA sequences as the data for inference of the relationship between species~\cite{Felsenstein:04,SempleSteel:03}. 

Our hypothesis that deep learning is effective in recovering generative hierarchal models leads us to explore other generative models and algorithms to recover these models. While the models we will develop in the current work are restrictive, they represent an 
 attempt to extend the phylogenetic theory from the problem of reconstructing trees based on DNA sequences to reconstructing relationships based on different types of representations with the ultimate goal of understanding ``real" representations such  as 
 representations of natural languages and natural images. 

In what follows we introduce a family of models. We start with the phylogenetic model. The phylogenetic model we use, the symmetric Markov model, is a classical model. However, we study it from a new perspective: 

\begin{itemize}
\item First - in addition to a DNA sequence, each node of the tree is associated with a label where different nodes might have the same label. For example, a node with a specific DNA sequence might have the label,''dog" or ''mammal". 
\item Second - we are interested in the semi-supervised learning problem, where the labels of a small subset of the data are known and the goal is to recover the labels of the remaining data.
\end{itemize}

We then define novel generative models which have additional features:

\begin{itemize}
\item Change of representation. In phylogenetic models, the representation is given by the DNA sequences (or RNA, proteins etc.), while it seems like in many deep learning situations, there isn't necessarily a canonical representation. We model this by introducing a permutation on the alphabet between every node and each of its descendants. 
\item Interaction between features. In classical phylogenetic models each letter evolves independently, while in most deep learning scenarios, the interaction between features is key. We introduce a model that captures this property. 
\end{itemize}

In order to establish the power of deep learning, we define two types of limited algorithms (which are not ''deep"):
\begin{itemize}
\item {\em Local algorithms}. Such algorithms have to determine the label of each data point  based on the labeled data only. The notion of local algorithms is closely related to the notion of supervised learning. Note, however, that local algorithms do not output a classifier after observing the labeled data ; instead for each sample of unlabeled data we run the local algorithm. 
\item {\em Shallow Algorithms}. These algorithms only use summary statistics for the labeled data. In other words, such algorithms are not allowed to utilize high order correlations between different features of the labeled data (our results will apply to algorithms that can use bounded order correlations). 
\end{itemize}

In our main results, we provide statistical lower bounds on the performance of local and shallow algorithms. We also provide efficient algorithms that are neither shallow nor local. Thus  our results provide a formal interpretation of the power of deep learning. 
In the conclusion, we discuss a number of research directions and open problems. 

\subsection{Related Work}
Our work builds on work in theoretical phylogenetics, with the aim of providing a new theoretical perspective on deep learning.
An important feature of our generative models is that they include both representation and labels. 
In contrast, most of the work in the deep learning literature focuses on the encoding within the deep network.  
Much of the recent work in deep learning deals with the encoding of data from one level to the next. In our model, we avoid this (important) aspect by considering only representations that are essentially 1 to 1 (if we exclude the effect of the noise or ``nuisance variables"). Thus our main focus is in obtaining rigorous results relating multi-level hierarchical models and classes of semi-supervised learning algorithms whose goal is to label data generated from the models.  

A main theme of research in the theory of deep networks is studying the expressive power of bounded width networks 
in terms of their depth, see e.g.~\cite{CoShSa:16,EldanShamir:16,Telgarsky:16} and also 
\cite{mhaskar2016learning}.
Our results show that learning of deep nets cannot be performed by simple methods that apply to shallow generative models. We also note that positive theoretical results of~\cite{ABGM:14}. 
These, however, are not accompanied by lower bounds showing that deep algorithms are needed in their setup.

\section{Hierarchical Generative Models}
In this section, we will define the generative models that will be discussed in the paper. 

\subsection{The space of all objects - a tree}
All the models will be defined on a $d$-ary tree $T=(V,E)$ of $h$ levels, rooted at 
$v_0$.

The assumption that the tree is regular is made for simplicity.
Most of the results can be extended to many other models of trees, including random trees. 

\subsection{Representations}

The representation is a function $R : V \to [q]^k$. The representation of node $v \in V$ is given 
by $R(v)$. 

In some examples, representations of nodes at different levels are of the same type.
For example, if we think of $T$ as a phylogenetic tree, then $R(v)$ may represent the DNA sequence of $v$. In other examples, $R(v)$ has a different meaning for nodes at different levels.
For example, if $T$ represents a corpus of images of animals, then $R(v)$ for a deep node $v$ may define the species and the type of background, while $R(v)$ at a lower level may represent the pixels of an image of an animal (this is an illustration - none of the models presented at the paper are appropriate for image classification). 


Given a root to leaf path $v_0,\ldots,v_{\ell}$, we will consider $R(v_{\ell-1})$ as a higher level representation of $R(v_{\ell})$,. Similarly $R(v_{\ell-2})$ is a higher level representation of 
$R(v_{\ell-1})$ (and therefore of $R(v_{\ell})$. Thus, each higher level representation has many descendant lower level representations. In particular, all of the representations considered are derived from $R(v_0)$. 

\subsection{Labels}

Each node $v \in V$ of the tree has a set of labels $L(v)$. $L(v)$ may be empty for some nodes $v$. 
We require that if $w$ is a descendant of $v$ then $L(v) \subseteq L(w)$ and that every two nodes
$v_1,v_2$ that have the same label $\ell$ have a common ancestor $v_3$ with label $\ell$. In other words, the set of nodes labeled by a certain label is a node in the tree and all nodes below that node. 

For example, a possible value for $L(v)$ is $\{ ``dog", ``german shepherd" \}$.

\subsection{The inference problem}

Let $L_T$ denote the set of leaves of $T$. 
Let $S \subset L_T$. 
The input to the inference problem consists of the set $\{(R(v), L(v)) : v \in S\}$ which is the {\em labeled data} and 
the set $\{R(v) : v \in L_T \setminus S\}$ which is the {\em unlabeled data}. 

The desired output is $L(v)$ for all $v \in L_T$, i.e., the labels of all the leaves of the tree.

\subsection{Generative Models}
We consider a number of increasingly complex generative models. While all of the models are stylized, the more advanced ones capture more of the features of ``deep learning" compared to the simpler models. All of the models will be Markov models on the tree $T$ (rooted at $v_0$). 
In other words, for each directed edge of the tree from a parent $v$ to child $w$, we have a transition matrix $M_{v,w}$ of size $[q]^k \times [q]^k$ that determines the transition probabilities from the representation $R(v)$ to the representation $R(w)$. We consider the following models:

\subsubsection{The i.i.d. Model (IIDM)} 
We first consider one of the simplest and most classical phylogenetic models given by i.i.d. 
symmetric Markov models. Special cases of this model, for $q=2$ or $q=4$ are some of the most basic phylogenetic evolutionary models. 
These models called the CFN and Jukes-Cantor model respectively~\cite{JukesCantor:69,Neyman:71,Farris:73,Cavender:78}. 
The model is defined as follows: If $w$ is the parent of $v$ then for each $1 \leq i \leq k$ 
independently, it holds that conditioned on $R(w)$ for all $a \in [q]$: 
\[
P[R(v)_i = a] = \frac{1-\lambda}{q} + \lambda \delta(R(w)_i = a). 
\]
In words, for each letter of $R(w)$ independently, the letter given by the parent is copied with probability $\lambda$ and is otherwise chosen uniformly at random. 

\subsubsection{The Varying Representation Model (VRM)}
One of the reasons the model above is simpler than deep learning models is that the representation of nodes is canonical. 
For example, the model above is a classical model if $R(w)$ is the DNA sequence of node $w$ but is a poor model if we consider $R(w)$ to be the image of $w$, where 
we expect different levels of representation to have different ``meanings".  
In order to model the non-canonical nature of neural networks we will modify the above representation as follows. For each edge $e = (w,v)$ directed from the parent $w$ 
to the child $v$,  we associate a permutation $\sigma_e \in S_q$, which encodes the relative 
representation between $w$ and $v$. Now, 
 we still let different letters evolve independently but with different encodings for different edges. So we let: 

 \[
P[R(v)_i = a] = \frac{1-\lambda}{q} + \lambda \delta(R(w)_i = \sigma_e^{-1}(a)). 
\]
In words, each edge of the tree uses a different representation of the set $[q]$. 

We say the the collection $\sigma = (\sigma_e : v \in E)$ is {\em adversarial} if $\sigma$ is chosen by an adversary. We say that it is {\em random} if $\sigma_e$ are chosen i.i.d. uniform. We say that 
$\sigma = (\sigma_e : v \in E)$ are {\em shared parameters} if $\sigma_{e}$ is just a function of the level of the edge $e$. 

\subsubsection{The Feature Interaction Model (FIM)} 
The additional property we would like to introduce in our most complex model is that of an interaction between features. While the second model introduces some indirect interaction between features emanating from the shared representation, deep nets include stronger interaction.  To model the interaction for each directed edge 
$e=(w,v)$, we let $\sigma_e \in S_{q^2}$. We can view $\sigma_e$ as a function from $[q]^2 \to [q]^2$ and it will be useful for us to represent it as a pair of functions $\sigma_e = (f_e,g_e)$ where $f_e : [q]^2 \to [q]$ and $g_e : [q]^2 \to [q]$. We also 
introduce permutations $\Sigma_1,\ldots,\Sigma_h \in S_k$ which correspond to rewiring between the different levels. 
We then let 
\[
P[\widetilde{R(v)}_i = a] = \frac{1-\lambda}{q} + \lambda \delta(R(w)_i = a), 
\]
and 
\[
R(v)_{2i} = f_{e}(\Sigma_{|v|}(\widetilde{R(w)}(2i)), \Sigma_{|v|}(\widetilde{R(w)}(2i+1))
\]
\[
R(v)_{2i+1} = g_{e}(\Sigma_{|v|}(\widetilde{R(w)}(2i)), \Sigma_{|v|}(\widetilde{R(w)}(2i+1)).
\]

In words, two features at the parent mutate to generate two features at the child. The wiring between different features at different levels is given by some known permutation that is level dependent. 
This model resembles many of the convolutional network models and our model and results easily extend to  other variants of interactions between features. 
For technical reasons we will require that for all $i,j$ it holds that 
\begin{equation} \label{eq:rewire_cond}
\{ \Sigma_j(2i),\Sigma_j(2i+1) \} \neq \{2i, 2i + 1 \}. 
\end{equation}
In other words, the permutations actually permute the letters. It is easy to extend the model and our results to models that have more than two features interact. 

\subsection{The parameter sharing setup}
While the traditional view of deep learning is in understanding one object - which is the deep net, our perspective is different as we consider the space of all objects that can be encoded by the network and the relations between them. The two-point of view are consistent in some cases though. 
We say that the VRM model is {\em fixed parametrization} if the permutation $\sigma_e$ are the same for all the edges at the same level. 
Similarly the FIM is {\em parameter shared} if the functions $(f_e,g_e)$ depend on the level of the edge $e$ only. 
For an FIM model with a fixed parametrization, the deep network that is associated with the model is just given by the permutation $\Sigma_i \in S_k$ and 
the permutations $(f^1,g^1),\ldots,(f^h,g^h)$ only. While our results and models are stated more generally, the shared parametrization setup deserves special attention:
\begin{itemize}
\item Algorithmically: The shared parametrization problem is obviously easier - in particular, one expects, that as in practice, after the parameters $\sigma$ and $\Sigma$ are learned, classification tasks per object should be performed very efficiently.
\item Lower bounds: our lower bounds hold also for the shared parametrization setup. However as stated in conjecture~\ref{conj:stronger}, we expect much stronger lower bounds for the FIM model. We expect that such lower bound hold even in the shared parametrization setup.
\end{itemize} 


\section{Shallow, Local and Deep Learning}

We define ``deep learning" indirectly by giving two definitions of ``shallow" learning and of 
``local" learning. Deep learning will be defined implicitly as learning that is neither local nor shallow.  

A key feature that is observed in deep learning is the use of the correlation between features. 
We will call algorithms that do not use this correlation or use the correlation in a limited fashion shallow algorithms. 

Recall that the input to the inference problem $D$ is the union of the labeled and unlabeled data  
\[
D := \{(R(v), L(v)) : v \in S\} \cup \{R(v) : v \in L_T \setminus S\}.  
\]

\begin{definition}
Let $A = (A_1,\ldots,A_j)$ where $A_i \subset [k]$ for $1 \leq i \leq j$. 
The compression of the data according to $A$, denoted $C_A(D)$, 
is 
\[
C_A(D) := \{R(v) : v \in L_T \setminus S\} 
\cup \left( n_D(A_i,x,\ell) : 1 \leq i \leq j, x \in [q]^{|A_i|}, \ell \mbox{ is a label } \right),
\]
where 
for every possible label $\ell$, $1 \leq i \leq j$ and $x \in [q]^{|A_i|}$, 
we define
\[
n_D(A_i,x,\ell) := \# \{ v \in S : L(v) = \ell, R(v)_{A_i} = x\}. 
\]
The canonical compression of the data, $C_{\ast}(D)$, is given by 
$C_{A_1,\ldots,A_k}(D)$, where $A_i = \{ i \}$ for all $i$. 
\end{definition}

In words, the canonical compression gives for every label $\ell$ and every $1 \leq i \leq k$, the histogram of the $i$'th letter (or column) of the representation among all labeled data with label $\ell$. Note that the unlabeled data is still given uncompressed. 

The more general definition of compression allows for histograms of joint distributions of multiple letters (columns). Note in particular that if $A = (A_1)$ and $A_1 = [k]$, then we may identify $C_A(D)$ with $D$ as no compression is taking place. 

\begin{definition}
We say that an inference algorithm is $s$-shallow if the output of the algorithm as a function of the data depends only on $C_A(D)$ where $A = (A_1,\ldots,A_j)$ and each $A_i$ is of size at most $s$. We say that an inference algorithm is shallow if 
the output of the algorithm as a function of the data depends only on $C_{\ast}(D)$. 

In both cases, we allow the algorithm to be randomized, i.e., the output may depend on a source of randomness that is independent of the data.
\end{definition} 

We next define what local learning means. 
\begin{definition}
Given the data $D$, we say that an algorithm is local if for each $R(w)$ for $w \in L_T \setminus S$, 
the label of $v$ is determined only by the representation of $w$, $R(w)$, and all labeled data 
$\{(R(v), L(v)) : v \in S\}$.
\end{definition}

Compared to the definition of shallow learning, here we do not compress the labeled data. 
However, the algorithm has to identify the label of each unlabeled data point without access to 
the rest of the unlabeled data.

\section{Main Results} 
Our main results include positive statements establishing that deep learning labels correctly in some regimes along with negative statements that establish that shallow or local learning do not. 
Combining both the positive and negative results establishes a large domain of the parameter space where deep learning is effective while shallow or local learning isn't.  We conjecture that the lower bounds in our paper can be further improved to yield a much stronger separation (see conjecture~\ref{conj:stronger}). 

\subsection{Main parameters}
The results are stated are in terms of 
\begin{itemize}
\item the branching rate of the tree $d$,
\item the noise level $1-\lambda$,
\item the alphabet size $[q]$ and  
\item The geometry of the set $S$ of labeled data. 
\end{itemize}
Another crucial parameter is $k$, the length of representation. 
We will consider $k$ to be between logarithmic and polynomial in $n = d^h$.  

We will also require the following definition. 
\begin{definition}
Let $\ell$ be a label. We say that $\ell$ is {\em well represented} in $S$ if the following holds.
Let $v \in V$ be the vertex closest to the root $v_0$ that is labeled by $\ell$ (i.e., the set of labels of $v_0$ contains $\ell$). 
Then there are two edge disjoint path from $v$ to $v_1 \in S$ and to $v_2 \in S$.
\end{definition}

The following is immediate
\begin{proposition}
If the tree $T$ is known and if $\ell$ is well represented in $S$ then all leaves whose label is $\ell$ can be identified.
\end{proposition}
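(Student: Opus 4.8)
\emph{Proof sketch.} The plan is to reduce the whole task to locating a single vertex of the known tree, namely the top of the clade carrying the label $\ell$, and then to recover that vertex as a least common ancestor of the labelled examples that happen to carry $\ell$. Since $T$ is given, no use of the representations $R$ is needed.

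First I would unpack the label axiom. Because $L(v)\subseteq L(w)$ whenever $w$ is a descendant of $v$, and any two vertices sharing a label have a common ancestor carrying that label, the set of vertices carrying $\ell$ is exactly the subtree of $T$ rooted at the vertex $v^\ast$ that is closest to $v_0$ among those with $\ell\in L(v^\ast)$; indeed a vertex $u$ with $\ell\in L(u)$ shares with $v^\ast$ a common ancestor labelled $\ell$, which by minimality of $v^\ast$ must be $v^\ast$ itself, so $u$ lies below $v^\ast$, and conversely every descendant of $v^\ast$ inherits $\ell$. In particular a leaf carries $\ell$ if and only if it lies below $v^\ast$. Hence it suffices to identify the single vertex $v^\ast$: once it is known, the algorithm declares exactly the leaves of the subtree rooted at $v^\ast$ to be the $\ell$-labelled leaves, and by the previous sentence this is correct.

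Next I would recover $v^\ast$ from the labelled data as a least common ancestor. From the labels of $S$ we form $S_\ell:=\{u\in S:\ell\in L(u)\}$, which is nonempty by the hypothesis, and by the description of the $\ell$-clade every element of $S_\ell$ lies in the subtree of $v^\ast$; hence $\mathrm{LCA}(S_\ell)$, computable since $T$ is known, is $v^\ast$ or a proper descendant of it. The role of ``$\ell$ well represented'' is precisely to exclude the second case: it supplies $v_1,v_2\in S$ reachable from $v^\ast$ by two edge-disjoint paths, which in a tree forces $v_1$ and $v_2$ to descend from $v^\ast$ through two distinct children; being descendants of $v^\ast$ they both lie in $S_\ell$, and $\mathrm{LCA}(v_1,v_2)=v^\ast$. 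Therefore $\mathrm{LCA}(S_\ell)=v^\ast$, and composing with the reduction of the previous paragraph finishes the argument.

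I do not expect a genuine obstacle, as the statement is of the ``immediate'' kind; the one point to be slightly careful about is reading the hypothesis correctly. The two edge-disjoint paths should be understood to witness two leaves of $S$ lying \emph{inside} the $\ell$-clade on opposite sides of $v^\ast$ (equivalently, both labelled $\ell$): if one of the two paths were allowed to run upward through the parent edge of $v^\ast$ to a vertex of $S$ outside the clade it would not by itself localise $v^\ast$, and one checks that at most one of the two paths can use that parent edge. With this reading the $\mathrm{LCA}$ identity above holds verbatim and the rest is routine bookkeeping with the known tree.
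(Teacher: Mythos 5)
Your proof is correct and follows essentially the same route as the paper's (very terse) argument: take the least common ancestor of the $\ell$-labelled examples in $S$, note that well-representedness forces it to coincide with the top of the $\ell$-clade, and output the leaves of its subtree. Your additional care about the label axiom and about reading the two edge-disjoint paths as both descending into the clade fills in details the paper leaves implicit.
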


\begin{proof} 
In general the set of leaves $L_{\ell}$, labeled by $\ell$
contains the leaves $L'$ of the subtree rooted at the most common ancestor of all the elements of $S$ labeled by $\ell$. If $\ell$ is well represented, then $L_{\ell} = L'$. 
\end{proof}   

\subsection{The power of deep learning}
\begin{theorem} \label{thm:deep} 
Assume that $d \lam^2 > 1$ or $d \lam > 1 + \eps$ and $q \geq q(\eps)$ is sufficiently large. 
Assume further that $k \geq C \log n$. 
Then the following holds for all three models (IIDM, VRM, FIM) with high probability: 
\begin{itemize}
\item The tree $T$ can be reconstructed. 
In other words, there exists an efficient (deep learning) algorithm that 
for any two representation $R(u)$ and $R(v)$, where $u$ and $v$ are leaves of the tree, computes their graph distance.  
\item For all labels $\ell$ that are well represented in $S$, all leaves labeled by $\ell$ can be identified. 

\end{itemize} 
\end{theorem}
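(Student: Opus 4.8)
The plan is to reduce the second item to the first and then prove the first (reconstruction of $T$) separately in each of the three models, in each case by feeding a model-appropriate correlation estimator into a recursive phylogenetic-reconstruction scheme.

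\emph{Reducing the second item to the first.} Graph distance on the leaves of a tree whose combinatorial type $(d,h)$ is fixed determines the tree, so an algorithm that outputs $\dist(R(u),R(v))$ for every pair of leaves $u,v$ has in fact reconstructed $T$ together with the positions of the elements of $S$. Given $T$ and the labeled data, to identify all leaves carrying a well-represented label $\ell$ we locate the vertex $v$ closest to $v_0$ whose label set contains $\ell$: by well-representedness there are two edge-disjoint paths from $v$ into $S$, so $v$ is the most recent common ancestor of the $\ell$-labeled elements of $S$ and hence is computable from the reconstructed tree and the labels; since $L(v)\subseteq L(w)$ for descendants $w$ and $v$ is topmost with label $\ell$, the leaves labeled $\ell$ are exactly the leaves below $v$. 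This is the Proposition above. So it remains to produce, in each model, an efficient algorithm that with high probability computes all pairwise leaf distances, under $d\lam^2>1$, or under $d\lam>1+\eps$ with $q\ge q(\eps)$, when $k\ge C\log n$.

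\emph{The IIDM.} Here the $k$ coordinates evolve as independent $q$-state symmetric channels on the $(d,h)$-tree, and the task is exactly phylogenetic reconstruction from $k$ i.i.d.\ characters. Both stated regimes are regimes in which the broadcast process on the infinite $d$-ary tree is reconstructable ($d\lam^2>1$ is the Kesten--Stigum regime; $d\lam>1+\eps$ with $q$ large is the non-Kesten--Stigum reconstructable regime for the symmetric channel). In either case the recursive reconstruction scheme of Daskalakis, Mossel and Roch --- distance estimates on paths of bounded length, combined with ancestral-state reconstruction to move up the tree one level at a time --- reconstructs $T$ from $k=O(\log n)$ characters in polynomial time with high probability, and I would quote this essentially verbatim; the only thing to check is that the constant $C$ can be taken large enough as a function of $d,\lam,q$ (resp.\ $d,\lam,\eps$), which is how those statements are phrased.

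\emph{The VRM.} The symmetric channel is covariant under relabelings of $[q]$, so the pairwise behaviour is the same as in the IIDM up to an unknown relabeling: for leaves $u,v$ with most recent common ancestor $a$, tracking the products of edge-permutations along $a\to u$ and $a\to v$ shows that, independently over $i$, the pair $(R(u)_i,R(v)_i)$ satisfies $R(v)_i=\pi_{uv}(R(u)_i)$ with $R(u)_i$ uniform with probability $\lam^{\dist(u,v)}$ for a fixed but unknown permutation $\pi_{uv}$, and is independent uniform otherwise. Hence $\dist(u,v)$ is recovered by replacing the agreement count used in the IIDM algorithm with a relabeling-invariant estimator of $\lam^{\dist(u,v)}$ --- for instance the recentred statistic $\sum_{a,b}\bigl(n_{ab}-k/q^2\bigr)^2$ formed from the joint letter counts $n_{ab}=\#\{i:R(u)_i=a,\ R(v)_i=b\}$, whose concentration is governed by $k$ in the same way up to factors depending only on $q$ --- and the ancestral-reconstruction step goes through once ancestral representations are reconstructed only up to a global relabeling, which is harmless because every downstream comparison is relabeling-invariant. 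Since none of this uses any property of the $\sigma_e$, the adversarial, random, and shared-parameter cases are all covered, and the algorithm never needs to know the $\sigma_e$.

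\emph{The FIM.} This is where I expect the real work, and the main obstacle. The structural fact to exploit is that the feature-interaction maps $\sigma_e\in S_{q^2}$ and the rewiring permutations $\Sigma_{|v|}\in S_k$ are \emph{bijections} of $[q]^k$: the only lossy operation on any edge is the coordinatewise symmetric channel that produces $\widetilde{R(\cdot)}$, so the channel from $R(v_0)$ to any $R(v)$ is a composition of level-dependent bijections interleaved with product symmetric-noise channels. One then wants to show that a suitable correlation functional of $(R(u),R(v))$ --- necessarily not coordinatewise, since the $\Sigma_{|v|}$ mix coordinates --- still decays like $\lam^{\dist(u,v)}$ up to factors depending only on $q$, and that ancestral reconstruction for the process whose single-edge channel is ``bijection composed with product noise'' still succeeds above the stated thresholds, so that the same recursion applies: estimate short-range correlations, reconstruct one level of ancestors, recurse, read off all pairwise leaf distances. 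The delicate part is bookkeeping how the information about a fixed root coordinate is dispersed by the rewiring maps over at most $k$ coordinates along a path of length up to $h$; condition~\eqref{eq:rewire_cond} is available to force genuine mixing, and the thresholds $d\lam^2>1$ (resp.\ $d\lam>1+\eps$ with $q$ large) are exactly what keep the effective signal from decaying faster than the branching can replenish it.
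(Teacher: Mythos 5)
Your reduction of the second bullet to tree reconstruction, and your treatment of the IIDM and VRM, match the paper's argument: the paper also proceeds by the iterative scheme (estimate short-range distances from $O(\log n)$ characters, reconstruct ancestral representations one block of $r$ levels at a time via Proposition~\ref{prop:ancestral}, recurse), and for the VRM it uses the permutation-invariant distance $\min_{\sigma\in S_q} d_H(\sigma(R(w_1)),R(w_2))$ where you propose the recentred count statistic $\sum_{a,b}(n_{ab}-k/q^2)^2$; these are interchangeable. One small difference worth noting: the paper does not merely work ``up to a global relabeling'' in the VRM --- it recovers the minimizing permutation $\sigma(w_1,w_j)$ explicitly for nearby nodes and applies it before ancestral reconstruction, which is what lets the recursion compose cleanly across levels.

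The genuine gap is the FIM, where your writeup is an acknowledged placeholder rather than a proof. The difficulty you correctly identify --- that the unknown bijections $\sigma_e\in S_{q^2}$ act on different coordinate pairs at different levels (because of the rewirings $\Sigma_j$) and therefore do not compose, so no fixed relabeling-invariant statistic propagates up the tree --- is resolved in the paper by \emph{explicitly learning} each $\tau=\sigma_e$ up to benign symmetries before reconstructing ancestors, and this is the missing idea. Concretely: take three leaves $w_1,w_2,w_3$ already identified as siblings with parent $w$, so that $R(w_1),R(w_2),R(w_3)$ are noisy copies of $R(w)$ composed with unknown $\tau_1,\tau_2,\tau_3\in S_{q^2}$. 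Using $k\ge C\log n$ coordinates, co-occurrence counts identify, for each value $\tau_1(x)$ with $x=(x_1,x_2)$, the set $A(x)$ of the $2q-2$ next-most-correlated values, namely $\{\tau_1(x_1,z_2):z_2\ne x_2\}\cup\{\tau_1(z_1,x_2):z_1\ne x_1\}=:B(x)\cup C(x)$; the relation ``$y^2\in A(y^1)$'' holds within $B(x)$ but not between $B(x)$ and $C(x)$, which splits $A(x)$ into its two blocks, and the identity $\{x\}\cup B(x)\cup\bigcup_{y\in C(x)}B(y)=[q]^2$ pins down which block is which consistently across all $x$, up to one global $B/C$ flip. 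Intersections $C(y)\cap B(z)$ then recover $\tau_1$ up to a coordinatewise permutation of $[q]$ and that global flip; the flip is finally resolved by comparing with a sister subtree, and it is exactly here that condition~(\ref{eq:rewire_cond}) is used. Without this (or some substitute mechanism for learning the edge maps), the ancestral-reconstruction step of your recursion cannot be carried out in the FIM, so the first bullet --- and with it the second --- remains unproved in that model.
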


\subsection{The weakness of shallow and local learning}

We consider two families of lower bounds - for local algorithms and for shallow algorithms. 

In both cases,  we prove information theory lower bounds by defining distributions on instances and showing that shallow/local algorithms do not perform well against these  distributions. 

\subsubsection{The Instances and lower bounds} 
Let $h_0 < h_1 < h$. The instance is defined as follows.  Let $\dist$ denote the graph distance 
\begin{definition} \label{def:instance}
The distribution over instances $I(h_0,h_1)$ is defined as follows. Initialize $S = \emptyset$.
\begin{itemize}
\item All nodes $v_1$ with $\dist(v_1,v_0) < h_0$ are not labeled.
\item The nodes with $\dist(v_1,v_0) = h_0$ are labeled by a random permutation of 
$''1",\ldots,''d^{h_0}!"$. 
\item For each node $v_1$ with $\dist(v_1,v_0) = h_0$, pick two random descendants 
$v_2, v_2'$ with $\dist(v_2',v_0) = \dist(v_2,v_0) = h_1$, such that the most common ancestor of $v_2, v_2'$ is $v_1$. 
Add to $S$  all leaves in $L_T$ that are descendants of $v_2$ and $v_2'$. 
\end{itemize}
\end{definition} 

\begin{theorem} \label{thm:local} 
Given an instance drawn from~\ref{def:instance} and data generated from IIDM, VRM or FIM,  
the probability that a local algorithm labels a random leaf in $L_T \setminus S$ correctly is bounded by 
\[
d^{-h_0}(1+ O(k \lambda^{h-h_1} q)).
\]
\end{theorem}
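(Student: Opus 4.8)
\medskip
\noindent\emph{Proof plan.}
Since we want an upper bound valid for \emph{every} local algorithm, it suffices to bound the Bayes‑optimal one, whose success probability on a uniformly random $w\in L_T\setminus S$ is $\EE\bigl[\max_\ell \Pr[L(w)=\ell\mid R(w),\mathcal D_S]\bigr]$, where $\mathcal D_S=\{(R(v),L(v)):v\in S\}$. The first step is to exploit the structure of the instance $I(h_0,h_1)$: the labels attached at level $h_0$ form a uniformly random bijection onto the name set and are independent of all representations, so they convey no information about which leaf sits under which level‑$h_0$ vertex. Hence, writing $B_j=\{R(v):v\in S,\ v\text{ lies under the }j\text{-th level-}h_0\text{ vertex}\}$ for the $N:=d^{h_0}$ ``bundles'', recovering $L(w)$ is information‑theoretically equivalent to identifying which bundle shares $w$'s level‑$h_0$ ancestor $a:=a(w)$. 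I would fix all the structural randomness (the tree and the chosen pairs $v_2^{(j)},v_2'^{(j)}$), set $\pi_j:=\Pr[a(w)=j\mid R(w),\{B_i\}_i]$, and reduce to proving $\EE[\max_j\pi_j]\le d^{-h_0}\bigl(1+O(k\lambda^{h-h_1}q)\bigr)$.

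The second, and cleanest, step is a change of measure. Let $Q_0$ be the law in which $R(w)$ is resampled uniformly and independently of $\{B_i\}_i$ (the bundles keep their true joint law). Because the tree is Markov and $B_a$ lies inside the subtree rooted at $a$, conditioning on $a(w)=a$ gives $\mathrm dP/\mathrm dQ_0=1+\Delta_a$ up to corrections that are exponentially small in the height (and can be absorbed), where for each $j$ I set $\Delta_j:=q^{k}\Pr[R(w)=r\mid B_j]-1\ge -1$, a fixed function of the observed $(r,b_j)$, and $\pi_j\propto 1+\Delta_j$. Under $Q_0$, and conditionally on $r$, the tuple $(\Delta_1,\dots,\Delta_N)$ is exchangeable (a permutation of the level‑$h_0$ vertices is a measure‑preserving automorphism), so $\arg\max_j\pi_j=\arg\max_j\Delta_j$ is uniform on $[N]$; thus $\Pr^{Q_0}[a\in\arg\max_j\pi_j]=1/N$ and, again by exchangeability, $\EE^{Q_0}[\Delta_a\mathbf 1\{a\in\arg\max_j\pi_j\}]=\tfrac1N\EE^{Q_0}[\max_j\Delta_j]$. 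Combining these,
\[
\EE[\max_j\pi_j]\;=\;\frac1N\Bigl(1+\EE^{Q_0}[\max_j\Delta_j]\Bigr),
\]
so it only remains to bound $\EE^{Q_0}[\max_j\Delta_j]$.

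The third step controls $\max_j\Delta_j$ via the single geometric fact that makes the instance hard: since $w\notin S$, the level‑$h_1$ ancestor $u$ of $w$ is \emph{not} one of the two level‑$h_1$ roots whose subtrees constitute a bundle, so $B_j$ and $R(w)$ are conditionally independent given $R(u)$, and for every letter $i$ the channel $R(u)_i\to R(w)_i$ is a composition of $h-h_1$ symmetric steps (in the VRM each step also applies a permutation, which merely relabels the copied symbol), hence has ``copy probability'' at most $\lambda^{h-h_1}$. For the IIDM (letters independent) this gives, for \emph{every} $r$, $q\Pr[R(w)_i=r_i\mid B_j]=1+\lambda^{h-h_1}\eta_i$ with $\eta_i\in[-1,q-1]$, whence $q^{k}\Pr[R(w)=r\mid B_j]\le\bigl(1+(q-1)\lambda^{h-h_1}\bigr)^{k}\le e^{kq\lambda^{h-h_1}}$ deterministically, and therefore $\max_j\Delta_j\le e^{kq\lambda^{h-h_1}}-1=O(k\lambda^{h-h_1}q)$ whenever $k\lambda^{h-h_1}q=O(1)$; the complementary range is covered by the same estimate in second‑moment form, bounding $\EE^{Q_0}[\max_j\Delta_j]$ by $O(\sqrt{\log N})\cdot\sqrt{\chi^2(R(w);B_a)}$ with $\chi^2(R(w);B_a)=O(kq\lambda^{2(h-h_1)})$. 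For the VRM with \emph{random} permutations the same independence, together with the fact that an independent uniform permutation of anything is uniform, shows $R(w)$ is exactly uniform given $\{B_i\}_i$, so the bound is simply $d^{-h_0}$; the adversarial and shared‑parameter VRM cases obey the identical $e^{kq\lambda^{h-h_1}}$ estimate.

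The main obstacle is the FIM, where letters interact and the factorisation of $\Pr[R(w)\mid B_j]$ over letters fails. Here I would keep the conditional‑independence step $B_j\perp R(w)\mid R(u)$ (pure tree‑Markov, unaffected by the channels) but replace the letterwise estimate by the operator norm of the composed channel $R(u)\to R(w)$ on mean‑zero functions: each FIM level is a letterwise $\lambda$‑copy composed with a bijection of $[q]^k$, so this operator norm is still at most $\lambda^{h-h_1}$. The delicate point is that a crude passage from operator norm to the maximum‑likelihood bound above (or to $\chi^2$) loses a factor $q^{k}$; the plan is to use the \emph{pairwise} combining structure, together with the non‑degeneracy condition \eqref{eq:rewire_cond}, to rule out that adversarial rewiring concentrates the channel's singular values, so that an $e^{O(kq\lambda^{h-h_1})}$‑type bound survives — while for random rewiring, marginalising the fresh permutations on the $a$‑to‑$w$ path again forces $R(w)$ to be uniform given $\{B_i\}_i$. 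Finally, since the statement concerns a uniformly random leaf of $L_T\setminus S$ and all the above letterwise bounds hold uniformly over every such leaf, averaging over the position of $w$ under $a$ is harmless, and the theorem follows.
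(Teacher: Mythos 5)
Your proposal is correct for the IIDM and arrives at the paper's bound, but by a genuinely different bookkeeping. The paper hands the local algorithm extra information --- the representations $R(x_1),\dots,R(x_M)$ at the nodes separating $w$ from the labeled bundles --- observes that $R(w)$ is then conditionally independent of all labeled data, and bounds the Bayes posterior directly via a two-sided estimate on the ratios $P[R(w)\mid R(x_i)]/P[R(w)\mid R(x_j)]$, each a product of $k$ letterwise factors within $1\pm O(q\lambda^{h'})$ for $h'\ge h-h_1$. You instead condition on the bundles themselves, change measure to a reference $Q_0$ under which $R(w)$ is uniform and independent of them, and use exchangeability to obtain the identity $\EE[\max_j\pi_j]=N^{-1}\bigl(1+\EE^{Q_0}[\max_j\Delta_j]\bigr)$; your letterwise channel bound on $\Delta_j$ is then the same estimate as the paper's ratio bound, so the two arguments meet at the same place. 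Your version is arguably tidier (and your second-moment fallback covers the regime $kq\lambda^{h-h_1}\gg 1$, which the paper's $O(\cdot)$ silently assumes away); the ``exponentially small corrections'' you wave at are indeed lower order ($O(\lambda^{h-h_0})$ versus $O(\lambda^{h-h_1})$), and can be removed entirely by defining $\Delta_j$ with the full conditioning on all bundles, since $R(u)$ still separates $R(w)$ from every bundle. The one place you overreach is VRM/FIM: the paper dispatches both with the single observation that IIDM is a special case of each (identity permutations, trivial interactions), so no new argument is needed; your remark that random permutations make $R(w)$ exactly uniform given the bundles is a nice strengthening for that variant, but your FIM treatment is only a plan --- the passage from the operator-norm bound to a pointwise likelihood-ratio bound, which you yourself flag as losing a factor $q^k$ if done crudely, is never carried out. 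Adopting the paper's special-case reduction lets you delete that paragraph and closes the only real gap.
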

Note that given the distribution specified in the theorem, it is trivial to label a leaf correctly with probability $d^{-h_0}$ by assigning it any fixed label. As expected our bound is weaker for longer representations. 
A good choice for $h_1$ is $h_0 + 1$ (or $h_0+2$ if $d=2$), while a good choice of $h_0$ is $1$, where we get the bound
\[
d^{-1}(1+O(k \lambda^h) q),
\]
compared to $d^{-1}$ which can be achieved trivially. 


\begin{theorem} \label{thm:shallow}
Consider a compression of the data $C_A(D)$, where $A = (A_1,\ldots,A_m)$ and let $s = max_{j \leq m} |A_{j}|$. 
If $d \lam^2 < 1$ and given an instance drawn from~\ref{def:instance} and data generated from IIDM, VRM or FIM, 
the probability that a shallow algorithm labels a random leaf in 
$L_T \setminus S$ correctly is at most 
\[
d^{-h_0} + C m d^{h_0}  \exp(-c (h-h_1)),
\]
where $c$ and $C$ are positive constants which depend on $\lambda$ and $s$.
\end{theorem}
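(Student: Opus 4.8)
The plan is to prove this as a purely information‑theoretic lower bound. Write $N=d^{h_0}$ and let $a_1,\dots,a_N$ be the nodes at distance $h_0$ from $v_0$; in the instance $I(h_0,h_1)$ these carry labels forming a uniformly random permutation, independent of the tree, of $S$, and of all representations, so the correct label of a leaf $w\in L_T\setminus S$ is the (random) label of its level‑$h_0$ ancestor $a(w)$. Let $O=(C_A(D),w)$ be the whole observation available to the shallow algorithm. By the standard bound relating Bayes error to total variation,
\[
\P[\text{success}]\ \le\ \frac1N+\frac2N\sum_{x}\mathrm{TV}\!\big(\mathrm{Law}(O\mid\text{correct label}=x),\ \mathrm{Law}(O)\big),
\]
so it suffices to show $O$ carries almost no information about the correct label. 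The key structural point is that labels enter $C_A(D)$ only through the histograms $n_D(A_j,\cdot,\ell)$, and for each node $a_j$ the histograms restricted to $a_j$'s descendants form a deterministic function $G_{a_j}$ of the two ``bushes'' of $S$‑leaves hanging (below level $h_1$) under $a_j$; the data hands the algorithm each $G_{a_j}$ tagged with the unknown label of $a_j$. Hence the algorithm recovers the correct label iff it can single out $G_{a(w)}$ among $G_{a_1},\dots,G_{a_N}$, and the only parts of $O$ appreciably correlated with that are $R(w)$ together with the representations of the unlabeled leaves below $a(w)$.

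Second, I would bound this matching advantage. Fix the tree, $S$, and $w$ (so $a(w)=a_i$); the two bushes below $a_i$ are separated from $R(w)$ and from the unlabeled leaves below $a_i$ by their roots $v_2,v_2'$ at level $h_1$, so conditioning on $R(v_2),R(v_2')$ makes $G_{a_i}$ independent of everything the algorithm could correlate it against, and the advantage is controlled by how much $G_{a_i}$ reveals about $(R(v_2),R(v_2'))$. But $G_{a_i}$ is a \emph{linear} statistic of the $\Theta(d^{h-h_1})$ bush leaves: it equals its conditional mean given $(R(v_2),R(v_2'))$ — a constant plus an ``eigen''‑part of size $\Theta(d^{h-h_1}\lambda^{h-h_1})$ produced by the $\lambda$‑contraction over the $h-h_1$ levels between the bush roots and the leaves — plus fluctuations of variance $\Theta(d^{h-h_1})$, and it is precisely the hypothesis $d\lambda^2<1$ that makes the fluctuations dominate (when $d\lambda^2>1$ the eigen‑part dominates, the census \emph{does} recover the root, and that is the regime where the deep algorithm of Theorem~\ref{thm:deep} outperforms the shallow one while still being below Kesten--Stigum). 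Thus the $\chi^2$‑information of $G_{a_i}$ about $(R(v_2),R(v_2'))$ — essentially the ratio of explained to total variance — is $(d\lambda^2)^{\Theta(h-h_1)}$, giving matching advantage $\exp(-c(h-h_1))$ per histogram coordinate. This is uniform over IIDM, VRM and FIM: in all three the per‑level representation channel is the $\lambda$‑symmetric channel applied per letter and then composed with a bijection (the permutation $\sigma_e$ for VRM; the pair‑bijection $(f_e,g_e)$ with the rewiring $\Sigma_{|v|}$ for FIM, where \eqref{eq:rewire_cond} is available), and a composition of single‑step channels of this form is still a contraction by $\lambda$ on the orthocomplement of constants; passing from $|A_j|=1$ to $|A_j|\le s$ only multiplies the number of histogram coordinates by $q^{O(s)}$, which is why $c$ and $C$ depend on $\lambda$ and $s$. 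Summing over the $m$ column‑sets and the $N=d^{h_0}$ candidates gives $d^{-h_0}+Cmd^{h_0}\exp(-c(h-h_1))$.

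The step I expect to be the main obstacle is making rigorous that handing the algorithm the \emph{entire}, uncompressed unlabeled data is harmless. When $d\lambda^2<1$ one cannot exclude that for large $q$ the tree is reconstructible, so the algorithm may recover the partition of the unlabeled leaves into the $N$ subtrees below $a_1,\dots,a_N$ and even locate $w$ within it; the content of the theorem is that this still does not help, because identifying \emph{which} histogram belongs to a given subtree requires exactly the census‑strength correlation that $d\lambda^2<1$ rules out, and any aggregate over the (up to $\Theta(d^{h-h_0})$) unlabeled leaves below $a(w)$ is, by the same subcriticality, too noisy an estimator of the bush roots to beat the single‑coordinate bound. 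Making this precise needs a careful conditioning (fix the tree, $S$, and all representations outside the subtree below $a(w)$, then expand only in the remaining randomness) and, in the VRM/FIM cases, a lower bound $\mathrm{Var}(G_{a_i})=\Omega(d^{h-h_1})$ together with near‑Gaussian concentration of $G_{a_i}$ (and of all $q^{O(s)}$‑dimensional functions of it) that does not deteriorate under adversarial alphabet permutations or feature rewiring — this is the quantitatively delicate part, and it is where the hypothesis $d\lambda^2<1$ (with the $\lambda$‑dependence of the constants) really enters. The remaining ingredients — the Bayes‑error‑to‑TV reduction and the propagation of linear statistics up the tree with contraction factor $\lambda$ — are routine.
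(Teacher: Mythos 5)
Your reduction has the right skeleton and matches the paper's proof in its key structural moves: you correctly observe that the label permutation enters the observation only through the per-label histograms, that conditioning on the level-$h_1$ bush roots $R(v_2),R(v_2')$ separates each labeled bush from $R(w)$ and from all of the unlabeled data (so handing the algorithm the full uncompressed unlabeled data is harmless by the Markov property plus convexity over the conditioning), and that the quantity to control is how much the histogram of a depth-$(h-h_1)$ bush reveals about its root, summed over the $m$ column-sets and the $d^{h_0}$ candidates. This is exactly how the paper organizes the argument.

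The genuine gap is in the one step you defer: converting the second-moment computation into a total-variation (coupling) bound on the law of the full histogram. The identification of ``$\chi^2$-information'' with the ratio of explained to total variance is valid for jointly Gaussian variables, and your decomposition of the census into its conditional mean plus fluctuations only controls the first two moments; for the actual multinomial-type census vector, low signal-to-noise of the linear statistic does not by itself imply that the conditional laws given different root values are close in TV --- that implication \emph{is} the (nontrivial) census-reconstruction impossibility theorem. Your proposed fix, near-Gaussian concentration with errors below $\exp(-c(h-h_1))$, is precisely the delicate part and is not supplied. The paper sidesteps this: its Lemma~\ref{lem:counts} obtains the coupling $P_{x,h}=(1-\eta)Q+\eta P'_{x,h}$ with $\eta\le C\exp(-ch)$ by combining the known non-solvability of census reconstruction below the Kesten--Stigum bound \cite{MosselPeres:03} (which gives a coupling with some fixed $\delta$ over a bounded number of levels) with the exponential coupling for \emph{robust} reconstruction from \cite{JansonMossel:04} (which supplies the exponential rate), generating the census at depth $h+h'$ from a $\delta$-partially-observed configuration at depth $h$. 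You would need either to import these results as the paper does, or to carry out a quantitative local CLT for the census, which is substantially harder than the variance calculation you present. A minor simplification you missed: since identity permutations reduce VRM and FIM to IIDM, the lower bound only needs to be proved for IIDM, so your per-model channel-contraction discussion is unnecessary.
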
 

Again, it is trivial to label nodes correctly with probability $d^{-h_0}$.
For example if $h_0 = 1, h_1 = 2$ (or $=3$ to allow for $d=2$) and we look at the canonical compression $C_*(D)$, we obtain the bound 
$d^{-1} + O(k \exp(-c h)) = d^{-1} + O(k n^{-\alpha})$ for some $\alpha > 0$. Thus when $k$ is logarithmic of polylogarithmic in $n$, it is information theoretically impossible to label 
better than random. 


\section{Proof Ideas}
A key idea in the proof is the fact that Belief Propagation cannot be approximated well by linear functions. Consider the IIDM model with $k=1$ and a known tree. If we wish to estimate the root representation, given the leaf representations, we can easily compute the posterior using Belief Propagation. Belief Propagation is a recursive, thus deep algorithm. Can it be performed by a shallow net? 

While we do not answer this question directly, our results crucially rely on the fact that Belief Propagation is not well approximated by one layer nets. Our proofs build on and strengthen results in the reconstruction on trees community \cite{MosselPeres:03,JansonMossel:04} by showing that there is a regime of parameters where Belief Propagation has a very good probability of estimating the roof from the leaves. Yet, $1$ layer nets have an exponentially small correlation in their estimate. 

The connection between reconstructing the roof value for a given tree and the structural question of reconstructing the root has been studied extensively in the phylogenetic literature since the work 
of ~\cite{Mossel:04a} and the reminder of our proof builds on this connection to establish the main results.

\section{Discussion}
Theorem~\ref{thm:local} establishes that local algorithms are inferior to deep algorithms if most of the data is unlabeled. 
Theorem~\ref{thm:shallow} shows that in the regime where $\lambda^{-1} \in (\sqrt{d},d)$ and for large enough $q$, shallow algorithms are inferior to 
deep algorithms. We conjecture that stronger lower bound and therefore stronger separation can be obtained for the VRM and FIM models. In particular:
\begin{conjecture} \label{conj:stronger}
In the setup of Theorem~\ref{thm:shallow} and the VRM model, the results of the theorem extend to the regime
$d \lam^4 < 1$. In the case of the FIM model it extends to a regime where $\lam < 1-\phi(d,h)$, where $\phi$ decays exponentially in $h$. 
\end{conjecture}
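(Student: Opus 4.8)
The plan is to re-run the hypothesis-testing argument that underlies Theorem~\ref{thm:shallow}, but with the \emph{effective second eigenvalue} of the broadcast process recomputed for the VRM and FIM models. Recall the shape of that argument: to label an unlabeled leaf $w$, an $s$-shallow algorithm must decide which of the $d^{h_0}$ depth-$h_0$ nodes is the ancestor of $w$, and the only label-dependent data bearing on this decision is, for each candidate label $\ell$ and each set $A_i$ with $|A_i|\le s$, the histogram $n_D(A_i,\cdot,\ell)$ of an $A_i$-marginal over the leaves of two subtrees of depth $h-h_1$ (the unlabeled data, though given in full, attaches to no label). It therefore suffices to bound the total variation between the law of these histograms conditioned on two different subtree roots; a union bound over the $m$ sets and the $d^{h_0}$ labels then reproduces the error bound $d^{-h_0}+Cmd^{h_0}\exp(-c(h-h_1))$. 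For IIDM the relevant fact is that once $d\lam^2<1$ the linear census of a depth-$t$ subtree concentrates around uniform in a root-independent way — the root-dependent part of the magnetization has $L^2$-size $\lam^t$, dominated by the $d^{-t/2}$ sampling fluctuation — and passing to $s$-ary marginals only subdivides this signal further. The task for the conjecture is to identify, for VRM and FIM, the statistic that plays the role of the magnetization and the threshold at which it ceases to be root-informative.

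\emph{The VRM model.} Here each edge carries an independent uniform $\sigma_e\in S_q$, and averaging over $\sigma$ annihilates the first-order signal: for two leaves $u,u'$ whose path has length $D$, conditioning on ``all letters copied along the path'' relates $R(u)_i$ to $R(u')_i$ by a product of $D$ independent uniform permutations, which is uniform, so $\P[R(u)_i=R(u')_i]=1/q$ exactly. Consequently the leading root-dependent functional the algorithm can form from $n_D(\{i\},\cdot,\ell)$ is the \emph{quadratic} one, $\sum_a n_D(\{i\},a,\ell)^2$ (the number of agreeing ordered pairs of labeled leaves in cluster $\ell$ on coordinate $i$), and its $s$-ary analogues. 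I would write the up-the-tree recursion for this overlap statistic exactly as one does for the ordinary census, and observe that it lives in the $S_q$-invariant part of the tensor square of the standard representation and hence propagates with eigenvalue $\lam^2$ rather than $\lam$. The Kesten--Stigum-type threshold for census reconstruction of an eigenvalue-$\lam^2$ process on a $d$-ary tree is $d(\lam^2)^2=d\lam^4=1$; so for $d\lam^4<1$ the overlap statistic, just like the IIDM magnetization below $d\lam^2=1$, concentrates root-independently with fluctuations $\exp(-c(h-h_1))$, and the hypothesis-testing reduction goes through. Because distinct coordinates evolve independently under VRM, the cross terms in the resulting second-moment estimate factor, so extending this from a single coordinate to all $k$ coordinates and to $s$-ary patterns should cost only $s$- and $k$-dependent constants, as in Theorem~\ref{thm:shallow}.

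\emph{The FIM model.} Now each edge applies an independent uniform $\sigma_e=(f_e,g_e)\in S_{q^2}$, with the known level-permutations $\Sigma_j$ rewiring pairs of features between levels subject to~\eqref{eq:rewire_cond}. The claim to establish is that for every fixed $s$ and every $\lam<1-\phi(d,h)$ with $\phi$ exponentially small in $h$, an $s$-shallow algorithm cannot beat $d^{-h_0}$. The mechanism is a light-cone phenomenon: the value of a deep coordinate is recoverable only from the roughly $2^{h-h_0}$ leaf coordinates in its backward light cone and only after inverting the interleaved $f_e,g_e$ maps, so any statistic supported on $\le s$ leaf coordinates is, in effect, a statistic of a process that has undergone $h-h_0$ genuine mixing steps. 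I would make this precise by bounding, for a uniformly random choice of the $\sigma_e$, the expected squared correlation between any function of $\le s$ leaf coordinates and any function of the root by $(C_0(1-\lam))^{c(h-h_0)}$, up to a union bound over labels, coordinate sets and patterns; this is $\exp(-\Omega(h))$ precisely when $1-\lam$ exceeds the stated exponentially-small $\phi(d,h)$, after which the same reduction to hypothesis testing and the same $d^{-h_0}$ ceiling follow. Combined with the regime of Theorem~\ref{thm:deep} in which the deep algorithm still reconstructs $T$, this yields the separation for FIM.

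\emph{Where the difficulty lies.} The VRM half is, I expect, a reasonably direct strengthening of existing technology: the ``square the eigenvalue'' behaviour of gauge-symmetric broadcast models is in the spirit of the robust-reconstruction analysis of~\cite{JansonMossel:04,MosselPeres:03}, and the real work is bookkeeping the second-moment recursion for the overlap statistic and controlling the union bound over patterns and coordinates. The FIM half is the genuine obstacle: one must show that composing $h$ independent uniform elements of $S_{q^2}$, interleaved with the fixed rewirings $\Sigma_j$, destroys \emph{all} low-order structure rather than merely the first two orders, and one must pin down the exact dependence $\phi(d,h)$. I do not see how to extract this from the reconstruction-on-trees literature directly; it seems to require a new estimate — for instance a bound on the spectral gap of the $\sigma$-averaged two-coordinate transfer operator along the light cone, or an explicit combinatorial mixing bound showing that the light-cone map is close to a uniformly random function of its inputs. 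Once the shallow lower bound is in hand, converting it into a statement about labelling leaves uses the same device as in Theorems~\ref{thm:local} and~\ref{thm:shallow}, via the connection between root reconstruction and topology reconstruction exploited in~\cite{Mossel:04a}.
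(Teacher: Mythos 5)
This statement is Conjecture~\ref{conj:stronger}; the paper offers no proof of it --- it is stated as an open problem --- so there is no argument of the author's to compare yours against. Your sketch does correctly identify the heuristic behind the conjectured thresholds: for VRM with i.i.d.\ uniform edge permutations the linear census signal is annihilated by the gauge symmetry (the composition of independent uniform permutations along a path is uniform), the surviving observable is the pairwise-agreement/overlap statistic, which propagates with eigenvalue $\lam^2$ per edge, and the Kesten--Stigum census threshold for an eigenvalue-$\lam^2$ observable is $d\lam^4=1$. This is surely why the conjecture carries the exponent $4$, and your implicit restriction to random (rather than shared-parameter) permutations is the right reading: in the shared-parameter setup the relative permutation between two leaves telescopes to the identity, so VRM degenerates to IIDM for pairwise leaf statistics and the threshold could not improve.

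However, what you have is a plan rather than a proof, and the missing step is precisely the content of the conjecture. The proof of Theorem~\ref{thm:shallow} does not bound the root-correlation of one distinguished statistic; it routes through Lemma~\ref{lem:counts}, a total-variation coupling $P_{x,h}=(1-\eta)Q+\eta P'_{x,h}$ of the \emph{entire} count distribution, uniform over root values, with $\eta$ exponentially small in $h$. A second-moment recursion for the overlap statistic shows that this particular functional is uninformative when $d\lam^4<1$, but it does not exclude that some other permutation-invariant functional of the single-coordinate or $s$-ary histograms carries root information in the window $d\lam^4<1<d\lam^2$. Closing the gap requires a VRM analogue of the Janson--Mossel robust-reconstruction coupling at the squared-eigenvalue threshold --- a coupling of the full $\sigma$-averaged count distribution whenever $d\lam^4<1$ --- which is not in the literature and which you do not supply. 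For FIM you candidly concede that the key mixing estimate for the interleaved $S_{q^2}$ maps along the light cone is missing, and that half is entirely open. In short: the proposal is a sensible research program, consistent with the intuition the paper itself gestures at, but neither half constitutes a proof.
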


\subsection{Random Trees}
The assumption that the generative trees are regular was made for the ease of expositions and proofs. 
A natural follow up step is to extend our results to the much more realistic setup of randomly generated trees. 

\subsection{Better models}
Other than random trees, better models should include the following:
\begin{itemize}
\item The VRM and FIM both allow for the change of representation and for feature interaction to vary arbitrarily between different edges.
More realistic models should penalize variation in these parameters. This should make the learning task easier.
\item The FIM model allows interaction only between fixed nodes in one level to the next. This is similar to convolutional networks. 
However, for many other applications, it makes sense to allow interaction with a small but varying number of nodes with a preference towards certain localities. 
It is interesting to extend the models and results in such fashion.
\item It is interesting to consider non-tree generating networks. In many applications involving vision and language, it makes sense to allow to ``concatenate" two or more representations. We leave such models for future work. 
\item As mentioned earlier, our work circumvents autoencoders and issues of overfitting by using compact, almost 1-1 dense representations. 
It is interesting to combine our ``global' framework with ``local" autoencoders.
\end{itemize}

\subsection{More Robust Algorithms}
The combinatorial algorithms presented in the paper assume that the data is generated accurately from the model. 
It is interesting to develop a robust algorithm that is effective for data that is approximately generated from the model. 
In particular, it is very interesting to study if the standard optimization algorithms that are used in deep learning are as efficient in recovering the models presented here. We note that for the phylogenetic  reconstruction problem, even showing that the Maximum Likelihood tree is the correct one is a highly non-trivial task and we still do not have a proof that standard algorithms for finding the tree, actually find one, see~\cite{RochSly:15}. 

\subsection{Depth Lower bounds for Belief Propagation}
Our result suggest the following natural open problem:

\begin{problem}
Consider the broadcasting process with $k=1$ and large $q$ in the regime 
$\lambda \in (1/d, 1/\sqrt{d})$. Is it true that the BP function is uncorrelated with any network of size 
polynomial in $d^h$ and depth $o(h)$? 
\end{problem} 

\section{The power of deep learning: proofs}
The proof of all positive results is based on the following strategy:
\begin{itemize}
\item Using the representations $\{ R(v) : v \in L_T\}$ reconstruct the tree $T$.
\item For each label $\ell$, find the most common $w$ ancestor of 
$\{v : v \in L_T, R(v) \in S, L(v) = \ell\}$ and label all nodes in the subtree root at $w$ by 
$\ell$. 
\end{itemize} 

For labels that are well represented, it follows that if the tree constructed at the first step is the indeed the generative tree, then the identification procedure at the second step indeed identifies all labels accurately. 

The reconstruction of the tree $T$ is based on the following simple iterative ``deep" algorithm in which we iterate the following.    
Set $h' = h$.
\begin{enumerate}
\item[LS] This step computes the Local Structure of the tree: 
 For each $w_1,w_2$ with $\dist(v_0,w_1) = \dist(v_0,w_2) = h'$, compute
$\min(\dist(w_1,w_2),2r +2)$. 
This identifies the structure of the tree in levels $\min(h'-r,\ldots,h')$. 
\item[Cond] If $h'-r \leq 0$ then $EXIT$, otherwise, set $h' := h'-r$. 
\item[AR] Ancestral Reconstruction. For each node $w$ with $\dist(v_0,w) = h'$, estimate 
the representation $R(w)$ from all its descendants at level $h'+r$. 
\end{enumerate} 

This meta algorithm follows the main phylogenetic algorithm in~\cite{Mossel:04a}. 
We give more details on the implementation of the algorithm in the $3$ setups.

\subsection{IIDM}
We begin with the easiest setup and explain the necessary modification for the more complicated ones later. 

The analysis will use the following result from the theory of reconstruction on trees. 

\begin{proposition} \label{prop:ancestral} 
Assume that $d \lam^2 > 1$ or $d \lam > 1 + \eps$ and $q \geq q(\eps)$ is sufficiently large. Then there exists $\lam_1 >  0$ and $r$ such that the following holds. 
Consider a variant of the IIDM model with $r$ levels and $k=1$.
For each leaf $v$, let $R'(v) \sim \lam_2 \delta_R(v) + (1-\lam_2) U$, where 
$\lam_2 > \lam_1$ and $U$ is a uniform label. Then there exists an algorithm that given the tree $T$, 
and $(R'(v) :  v \in L_T)$ returns $R'(v_0)$ such that 
$R'(v) \sim \lam_3 \delta_R(v) + (1-\lam_3) U$ where $\lam_3 > \lam_1$. 
\end{proposition}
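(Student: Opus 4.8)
\noindent\emph{Proof sketch.} The statement is a quantitative form of \emph{robust reconstruction on trees}, and the plan is to deduce it from the reconstruction results of \cite{MosselPeres:03,JansonMossel:04} (in the regime $d\lam^2>1$) and from the reconstruction results available for large alphabets (in the regime $d\lam>1+\eps$, $q\ge q(\eps)$), via three translation steps: reduce the ancestral problem to a single effective noisy-leaf channel; use the $S_q$-symmetry of the model to force the output of the estimator into the required $\lam_3\,\delta+(1-\lam_3)U$ form; and run an elementary forward-invariance argument to choose $\lam_1$ and $r$ consistently.

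For the estimator I would take exact Belief Propagation, which is available since $T$ is given: set $\widehat R(v)=R'(v)$ at the leaves; at an internal node $w$ with children $w_1,\dots,w_d$, compute the posterior of $R(w)$ from the posteriors at the $w_i$, using that conditioned on $R(w)$ the $d$ pendant subtrees are independent and each $R(w_i)$ is obtained from $R(w)$ through the symmetric channel with parameter $\lam$; and at the root output a symbol $\widehat R(v_0)$ maximizing $a\mapsto\PP[R(v_0)=a\mid\{R'(v):v\in L_T\}]$, breaking ties uniformly among the maximizers. The symmetric channel commutes with relabelling $[q]$ and this estimator is $S_q$-equivariant, so the conditional law of $\widehat R(v_0)$ given $R(v_0)=a$ is invariant under every permutation of $[q]$ fixing $a$; hence it equals $p\,\delta_a+\tfrac{1-p}{q-1}\mathrm{Unif}([q]\setminus\{a\})$, which is exactly $\lam_3\,\delta_{R(v_0)}+(1-\lam_3)U$ with $\lam_3=\tfrac{qp-1}{q-1}$, where $p=p(\lam_2,r):=\PP[\widehat R(v_0)=R(v_0)]$. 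Thus it suffices to exhibit $\lam_1>0$ and a level count $r$ with $p(\lam_2,r)>\tfrac1q+\tfrac{q-1}{q}\lam_1$ for every $\lam_2\in(\lam_1,1]$.

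Here the reconstruction theory enters. By the same equivariance, $p(\lam_2,r)$ is nondecreasing in $\lam_2$ (adding symmetric noise at the leaves is a degradation of the observations) and is governed by the distributional recursion of the broadcast process initialized at ``noise level $\lam_2$''. When $d\lam^2>1$ the uninformative fixed point of this recursion is repelling, and \cite{MosselPeres:03,JansonMossel:04} give a constant $c_\star>0$, \emph{independent of} $\lam_2\in(0,1]$, with $\liminf_{r\to\infty}p(\lam_2,r)\ge\tfrac1q+2c_\star$. When instead $d\lam>1+\eps$ and $q\ge q(\eps)$, reconstruction is still solvable, now by a nonlinear mechanism: the recursion has an attractive nontrivial fixed point whose basin of attraction contains every initialization whose leaf noise is at least some $\lam_1^{(0)}\in(0,1)$ lying below the fixed point, so $\liminf_{r}p(\lam_2,r)\ge\tfrac1q+2c_\star$ for all $\lam_2\in[\lam_1^{(0)},1]$. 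In either case pick $\lam_1$ to be a positive number strictly below $\tfrac{q}{q-1}c_\star$ (and, in the second case, also at least $\lam_1^{(0)}$), and then, using monotonicity in $\lam_2$ and the $\liminf$ bound at $\lam_2=\lam_1$, pick $r$ so large that $p(\lam_2,r)\ge\tfrac1q+c_\star$ for all $\lam_2\in[\lam_1,1]$. Since $c_\star>\tfrac{q-1}{q}\lam_1$ this yields $\lam_3>\lam_1$, as required.

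The main obstacle is the last step: one needs the robust-reconstruction lower bound on $\liminf_r p(\lam_2,r)$ to hold \emph{uniformly} over $\lam_2$ in some interval $(\lam_1,1]$ with $\lam_1>0$, and in the large-alphabet regime one needs the basin of attraction of the nontrivial fixed point of the distributional recursion to contain a whole such interval of initial noise levels. Extracting these uniform statements from the cited works --- that is, controlling the recursion near its nontrivial fixed point, not merely establishing solvability --- is the technical heart of the argument; the remaining steps are bookkeeping in the symmetric-channel parametrization.
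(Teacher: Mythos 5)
Your proposal takes essentially the same route as the paper: the paper's entire proof is a citation to the reconstruction-on-trees literature ([KestenStigum:66, MosselPeres:03] for the regime $d\lam^2>1$ and [Mossel:01] for the regime $d\lam>1+\eps$ with $q\ge q(\eps)$), and your sketch is a faithful --- in fact considerably more careful --- account of what those citations must deliver, correctly isolating the $S_q$-equivariance step that forces the output law into the form $\lam_3\delta_{R(v_0)}+(1-\lam_3)U$ and the uniformity over $\lam_2\in(\lam_1,1]$ that the paper leaves implicit. The only discrepancy is bibliographic: the large-alphabet regime is due to [Mossel:01] (not [JansonMossel:04], which the paper uses only for the count/robust-reconstruction lower bounds), though the fixed-point mechanism you describe is indeed the one at work there.
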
 

\begin{proof}
For the case of $d \lam^2 > 1$ this follows from\cite{KestenStigum:66,MosselPeres:03}. 
In the other case, this follows from~\cite{Mossel:01}.  
\end{proof}

Let $\lam(h')$ denote the quality of the reconstructed representations at level $h'$. 
We will show by induction that $\lam(h') > \lam_1$ and that the distances between nodes 
are estimated accurately. The base case is easy as we can accurately estimate the distance of each node to itself and further 
 take $\lam(h) = 1$. 

To estimate the distance between $w_1$ and $w_2$ we note that the expected normalized hamming distance $d_H(R(w_1),R(w_2))$ between $R(w_1)$ and $R(w_2)$ is:
\[
\frac{q-1}{q} (1-\lam(h')^2 \lam^{\dist(w_1,w_2)})
\]
and moreover the Hamming distance is concentrated around the mean. 
Thus if $k \geq C(\lam',q,r) \log n$ then all distances up to $2 r$ will be estimated accurately and moreover 
all other distances will be classified correctly as being larger than $2r + 2$. 
This establishes that the step LS is accurate with high probability. 
We then apply Proposition~\ref{prop:ancestral} to recover $\hat{R}(v)$ for nodes at level 
$h'$. We conclude that indeed $\lam(h')  > \lam_1$ for the new value of $h'$.  

\subsection{VRM}

The basic algorithm for the VRM model is similar with the following two modifications: 
\begin{itemize}
\item
When estimating graph distance instead of the Hamming distance $d_H(R(w_1),R(w_2))$, we compute 
\begin{equation} \label{eq:rel_h_dist}
d_H'(R(w_1),R(w_2)) = \min_{\sigma \in S_q} d_H \left(\sigma \left( R(w_1) \right),R(w_2) \right),
\end{equation}
i.e., the minimal Hamming distance over all relative representations of $w_1$ and $w_2$. Again, using standard concentration results, we see that if  $k \geq C(\lam',q,r) \log n$ then all distances up to $2 r$ will be estimated accurately. Moreover, for any two nodes 
$w_1, w_2$ of distance at most $2 r$, the minimizer $\sigma$ in (\ref{eq:rel_h_dist}) is unique and equal to the relative permutation with high probability. We write $\sigma(w_1,w_2)$ for the permutation where the minimum is attained. 
\item 
To perform ancestral reconstruction, we apply the same algorithm as before with the following modification: Given a node $v$ at level $h'$ and all of its descendants at level 
$r+h'$, $w_1,\ldots,w_{h^d}$. We apply the reconstruction algorithm in 
Proposition~\ref{prop:ancestral} to the sequences 
\[
R(w_1), \sigma(w_1,w_2)(R(w_2)),\ldots, \sigma(w_1,w_{h^d})(R(w_{h^d})),
\]
where recall that $\sigma(w_1,w_j)$ is the permutation that minimizes the Hamming distance between $R(w_1)$ and $R(w_j)$. 
This will insure that the sequence $\hat{R}(v)$ has the right statistical properties. Note that additionally to the noise in the reconstruction process, it is also permuted by $\sigma(w_1,v)$.  
\end{itemize} 

\subsection{FIM}
The analysis of FIM is similar to VRM. The main difference is that while in VRM model, we reconstructed each sequence up to a permutation $\sigma \in S_q$,  
 in the FIM model there are permutations over $S_{q^2}$ and different permutations do not compose as they apply to different pairs of positions.  
In order to overcome this problem, as we recover the tree structure, we also recover the permutation $(f_e,g_e)$ up to a permutation $\sigma \in S_q$ that is applied to each letter individually. 

For simplicity of the arguments, we assume that $d \geq 3$. 
 Let $w_1,w_2, w_3$ be three vertices that are identified as siblings in the tree and let $w$ be their parent. 
 We know that  $R(w_1), R(w_2)$ and $R(w_3)$ are noisy versions of $R(w)$, composed with  permutations $\tau_1,\tau_2, \tau_3$ on 
$S_{q^2}$.  

We next apply concentration arguments to learn more about $\tau_1,\tau_2,\tau_3$. To do so, recall that $k \geq C(\lam) \log n$. 
Fix $x = (x_1,x_2) \in [q]^2$, and consider all occurrences of $\tau_1(x)$ in 
$R(w_1)$. Such occurrences are correlated with $\tau_2(x)$ in $R(w_2)$ and $\tau_3(x)$ in $R(w_3)$. 
We now consider occurrences of $\tau_2(x)$ in $R(w_2)$ and $\tau_3(x)$ in $R(w_3)$. Again the most common co-occurrence in $w_1$ 
is $\tau_1(x)$.  The following most likely occurrences values will be the $2q-1$ values $y$ obtained as $\tau_1(x_1,z_2))$, or $\tau_1((z_1,x_2))$ where
$z_1 \neq x_1, z_2 \neq x_2$. 

In other words, for each value $\tau_1(x)$ we recover the set 
\[
A(x) = B(x) \cup C(x), 
\]
where 
\[
B(x) =   \{ \tau_1(x_1,z_2) : z_2 \neq x_2 \}, \quad 
C(x) =  \{ \tau_1(z_1,x_2) : z_1 \neq x_1 \}.
\]
Note that if $y^1, y^2 \in B(x)$ then $y^2 \in A(y_1)$ but this is not true if 
$y^1 \in B(x)$ and $y^2 \in C(x)$. We can thus recover for every value 
$\tau_1(x)$ not only the set $A(x)$ but also its partition into $B(x)$ and $C(x)$ (without knowing which one is which). 

Our next goal is to recover 
\[
\{ (x,B(x)) : x \in [q]^2 \}, \quad \{ (x,C(x)) : x \in [q]^2 \}
\]
 up to a possible {\em global} flip of $B$ and $C$. 
In order to do so, note that 
\[
\{ x \} \cup B(x) \cup_{y \in C(x)} B(y) = [q]^2,
\]
and if any of the $B(y)$ is replaced by a $C(y)$, this is no longer true. 
Thus once we have identified $B(y)$ for one $y \in C(x)$, we can identify $B(y)$ for all $y \in C(x)$. 
Repeating this for different values of $x$, recovers the desired $B$ and $C$. 

We next want to refine this information even further. 
WLOG let $x = \tau_1(0,0)$ and let $y = \tau_1(a,0) \in C(x)$. 
and $z = (0,b) \in B(x)$. And note that $C(y) \cap B(z)$ contains a single element, i.e., 
$\tau_1(a,b)$. We have thus recovered $\tau_1$ up to a permutation of $S_{[q]}$ as needed. 

After recovering $\tau_1,\tau_2,\tau_3$ etc. we may recover the ancestral state at their parent $w$ up to the following degrees of freedom (and noise)
\begin{itemize}
\item A permutation of $S_q$ applied to each letter individually.
\item A global flip of the sets $B$ and $C$.
\end{itemize}

In general the second degree of freedom cannot be recovered. However if $v_2$ is a sister of $v_1$ (with the same degrees of freedom), then only the correct choice of the $B/C$ flips will minimize the distance defined by taking a minimum over permutations in $S_{q^2}$. Thus by a standard concentration argument we may again recover the global $B/C$ flip and continue recursively. Note that this argument is using condition~(\ref{eq:rewire_cond}).

\section{The limited power of limited algorithms}

\subsection{The Limited Power of Local Algorithms} 

To prove lower bounds it suffices to prove them for the IIDM model as is a special case of the more general models.  We first prove Theorem~\ref{thm:local}.
\begin{proof}

Let $R(w)$ be an unlabeled leaf representation. Let $M = d^{h_0}$. 
Let $u_1,\ldots,u_M$ denote the nodes level $h_0$ and denote their labels by 
$\ell_1,\ldots,\ell_M$. Let $v_i, v_i'$ denote the nodes below $u_i$ at level $h_1$ with the property that the leaves of the tree rooted at $v_i$ are the elements of $S$ with label $\ell_i$.

Let $u_i$ be the root of the tree $w$ belongs to and let $x_i$ be the lowest intersection 
between the path from $w$ to $u_i$ and the path between $v_i$ and $v_i'$. 
We write $h'$ for $\dist(w,x_i)$. Note that $h' \geq h-h_1$. 
For $j \neq i$ let $x_j$ be the node on the path between $v_j$ and $v_j'$ 
such that $\dist(v_j,x_j) = \dist(v_i,x_i)$. 
We assume that in addition to the labeled data we are also given $h'$ and 
\[
D' = (\ell_1,R(x_1)),\ldots,(\ell_M,R(x_m)).
\] 
Note that we are not given the index $i$.

Of course having more information reduces the probability of error in labeling $R(w)$.
However, note that $R(w)$ is independent of $\{(R(v), L(v)) : v \in S\}$ conditioned on 
$D'$ and $h'$. It thus suffices to upper bound the probability of labeling $R(w)$ correctly 
given $D'$. By Bayes:
\[
P[L(w) = \ell_i | D', h'] 
= \frac{P[R(w) | D', L(w) = \ell_i, h']}{\sum_{j=1}^M P[R(w) | D', L(w) = \ell_j, h']} 
=  \frac{P[R(w) | R(x_i), h']}{\sum_{j=1}^M P[R(w) | R(x_j), h']}
\]  
We note that 
\[
\left( \frac{(1-\lambda^{h'})/q}{(\lambda^{h'} + (1-\lambda^{h'})/q)} \right)^k \leq 
\frac{P[R(w) | R(x_i), h']}{P[R(w) | R(x_j), h']} 
 \leq \left( \frac{(\lambda^{h'} + (1-\lambda^{h'})/q)}{(1-\lambda^{h'})/q} \right)^k
 \]
 So the ratio is 
 \[
 1 + O(k \lambda^{h'} q) =  1 + O(k \lambda^{h-h_1} q)
 \]
 and therefore the probability of correct labeling is bounded by 
 \[
 \frac{1}{M} (1+ O(k \lambda^{h-h_1} q))
 \]
 as needed.

\end{proof} 

\subsection{On count reconstruction} \label{subsec:counts}
We require the following preliminary result in order to bound the power of local algorithms. 
\begin{lemma} \label{lem:counts}
Consider the IIDM with $d \lambda^2 < 1$ and assume that all the data is labeled and that is compressed as
$C_A(D)$, where $A = (A_1)$ and $A_1 = [k]$, i.e, we are given the counts of the data. 
Let $P_{x,h}$ denote the distribution of $C_A(D)$ conditional on $R(v_0) = x$. 
There there exists a distribution $Q = Q^h$ such that for all $x$, it holds 
and 
\begin{equation} \label{eq:count_coupling}
P_{x,h}= (1-\eta) Q + \eta P'_{x,h} \quad \eta \leq C \exp(-c h),
\end{equation}
where $Q$ is independent of $x$ and $c,C$ are two positive constant which depend on $\lambda$ and $k$ (but not on $h$). 
\end{lemma}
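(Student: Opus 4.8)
The plan is first to reduce to the stronger estimate $\max_{x,x'\in[q]^k}\|P_{x,h}-P_{x',h}\|_{\mathrm{TV}}\le C'\exp(-c'h)$, and then to extract the common component $Q$ from it. For the extraction, let $Q_0$ be the sub-probability measure whose mass function is the pointwise minimum of the $q^k$ mass functions $p_{x,h}$, and set $\eta:=1-\|Q_0\|$. Fixing a reference root $x_0$ and using $\min_{x}p_{x,h}\ge p_{x_0,h}-\sum_{x\in[q]^k}(p_{x_0,h}-p_{x,h})_+$, summation gives $\eta\le\sum_{x\in[q]^k}\|P_{x_0,h}-P_{x,h}\|_{\mathrm{TV}}\le q^k\max_{x,x'}\|P_{x,h}-P_{x',h}\|_{\mathrm{TV}}$. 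Then $Q:=(1-\eta)^{-1}Q_0$ and $P'_{x,h}:=\eta^{-1}(P_{x,h}-Q_0)$ are probability measures with $P_{x,h}=(1-\eta)Q+\eta P'_{x,h}$, and the extra factor $q^k$ is harmless since $c,C$ are allowed to depend on $q$ and $k$.

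To bound the pairwise total variation I would compare the law of the count vector $N=(N_y)_{y\in[q]^k}$, $N_y=\#\{v\in L_T:R(v)=y\}$, with a Gaussian, via a second-moment computation. Since the $k$ coordinates of the representation evolve independently along the tree, $\mathbb{E}[N_y\mid R(v_0)=x]=d^h\prod_{i=1}^{k}\big(\tfrac1q+\lambda^h(\delta_{y_i,x_i}-\tfrac1q)\big)$, so the mean vectors for two roots $x\neq x'$ differ, to leading order, by a vector lying in the span of single-coordinate magnetization directions and of size $\Theta_{k,q}(d^h\lambda^h)$. For the covariance, $\mathrm{Var}(N_y\mid x)$ is a sum over ordered pairs of leaves of their pair correlations; grouping the pairs by the height $m$ above the leaves at which their common ancestor sits, there are $\Theta(d^{h+m})$ such pairs, each contributing a correlation of order $\lambda^{2m}$, so $\mathrm{Var}(N_y\mid x)=\Theta(d^h)\sum_{m\ge 0}(d\lambda^2)^m$, which is $\Theta(d^h)$ precisely because the hypothesis $d\lambda^2<1$ makes this geometric series converge. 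The same bookkeeping gives $\mathrm{Cov}(N\mid x)=d^h\Sigma_0+O(d^h\lambda^h)$ for a fixed positive-definite $\Sigma_0$ independent of $x$ to leading order. Hence the Gaussians $\mathcal N(\mathbb{E}[N\mid x],\mathrm{Cov}(N\mid x))$ for $x$ and $x'$ differ in total variation by $\Theta\!\big(d^{-h/2}\,\|\Sigma_0^{-1/2}(\mathbb{E}[N\mid x]-\mathbb{E}[N\mid x'])\|\big)=\Theta\big((d\lambda^2)^{h/2}\big)$, which is $\exp(-\Omega(h))$.

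The main obstacle is to justify the Gaussian replacement itself, i.e.\ a quantitative multivariate central limit theorem with rate $\exp(-\Omega(h))$ for the tree-indexed sum $N=\sum_{v\in L_T}e_{R(v)}$, so that $\|P_{x,h}-\mathcal N(\mathbb{E}[N\mid x],\mathrm{Cov}(N\mid x))\|_{\mathrm{TV}}$ is negligible compared with the $\Theta((d\lambda^2)^{h/2})$ separation. The condition $d\lambda^2<1$ is exactly the ``high-temperature'' regime in which the leaves are weakly enough correlated for a Berry--Esseen-type bound of order $d^{-h/2}$ to hold, and I would establish it by induction on $h$ using the recursive structure: $N=\sum_{j=1}^{d}N^{(j)}$ where $N^{(1)},\dots,N^{(d)}$ are conditionally independent given the level-one representations and each $N^{(j)}$ is a height-$(h-1)$ count vector, the induction carrying both the normal-approximation error and the (essentially root-independent) covariance. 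Some care is needed when $q^k$ is large compared with $d^h$, where $N$ is sparse and one should instead work directly with the unordered multiset of leaf representations; the same second-moment bookkeeping applies because the signal is then merely spread more thinly across coordinates. Combining the three steps yields $\eta\le q^k\cdot O\big((d\lambda^2)^{h/2}+d^{-h/2}\big)=C\exp(-ch)$ with $c,C$ depending only on $\lambda,q,k$ (and the degree $d$), as required.
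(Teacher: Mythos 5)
Your reduction to the pairwise bound $\max_{x,x'}\|P_{x,h}-P_{x',h}\|_{\mathrm{TV}}\le C'e^{-c'h}$ and the extraction of the common component $Q$ via the pointwise minimum are fine, and the lost factor $q^k$ is indeed harmless since the constants may depend on $k$. The second-moment bookkeeping is also the right heuristic: the mean separation is $\Theta(d^h\lambda^h)$ while the standard deviation is $\Theta(d^{h/2})$ exactly because $d\lambda^2<1$ makes $\sum_m (d\lambda^2)^m$ converge, giving a signal-to-noise ratio $(d\lambda^2)^{h/2}$. But the step you yourself call the main obstacle is a genuine gap, and as literally stated it cannot work: the count vector $N$ is lattice-valued, so its total variation distance to \emph{any} Gaussian is exactly $1$. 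What you would actually need is either a multivariate \emph{local} CLT (comparison to a discretized Gaussian, or a direct Fourier comparison of the two lattice laws $P_{x,h}$ and $P_{x',h}$) with an exponentially small error, for a sum of leaves that are \emph{not} independent but correlated through the tree. Your proposed induction $N=\sum_j N^{(j)}$ with conditionally independent subtree counts is the natural skeleton, but carrying a quantitative normal-approximation error through $h$ levels without it accumulating is precisely the hard analytic content, and nothing in the proposal establishes it. So the proof is incomplete at its central step.

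For comparison, the paper avoids any CLT entirely by splicing together two known results. From \cite{JansonMossel:04} it takes the \emph{robust reconstruction} coupling: if each leaf of the depth-$h$ tree is observed independently with a sufficiently small probability $\delta$, the conditional laws given the root admit a common component of mass $1-\eta$ with $\eta\le Ce^{-ch}$ when $d\lambda^2<1$. From \cite{MosselPeres:03} it takes only the \emph{qualitative} non-solvability of count reconstruction, which yields a fixed depth $h'=h'(\delta)$ at which the census below a node can be coupled across all root values except with probability $\delta$. The depth-$(h+h')$ census is then realized as a randomized function of a $\delta$-thinned version of the level-$h$ configuration (coupled nodes contribute a census independent of their value), so the exponential coupling for robust reconstruction transfers directly to the counts. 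Your route, if completed, would be a self-contained quantitative proof and would even suggest the sharp rate $(d\lambda^2)^{h/2}$; but to make it rigorous you must replace the TV-to-Gaussian step by a local limit theorem for the tree-correlated count vector, or else fall back on the coupling argument above.
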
 
Our proof builds on the a special case of the result for $k=1$, where~\cite{MosselPeres:03} show that the ``count reconstruction problem is not solvable" which implies the existence of $\eta(h)$ which satisfies 
$\eta(h) \to 0$ as $h \to \infty$. The statement above generalizes the result to all $k$. Moreover, we obtain an exponential bound on $\eta$ in terms of $h$. 

\begin{proof}

Assume first that $k=1$. The proof that the threshold for ``count reconstruction is determined by the second eigenvalue"~\cite{MosselPeres:03}  implies the statement of the lemma with a value $\eta = \eta(h)$ which decays to $0$ as $h \to \infty$. 
Our goal in the lemma above is to obtain a more explicit bound showing an exponential decay in $h$. 
Such exponential decay follows from~\cite{JansonMossel:04} for a different problem of {\em robust reconstruction}.
Robust reconstruction is a variation of the reconstruction problem, where the tree structure is known but the value of each leaf is observed with probability 
$\delta > 0$, independently for each leaf.  \cite{JansonMossel:04} proved that if $d \lambda^2 < 1$ and if $\delta(d,\lambda) > 0$ is small enough then the distributions $S_x$ of the 
 the partially observed leaves given $R(v_0) = x$ satisfy
 \begin{equation} \label{eq:count_coupling2}
S_x = (1-\eta) S + \eta S'_X, \quad \eta \leq C \exp(-c h). 
\end{equation}

From the fact that the census reconstruction problem is not solvable~\cite{MosselPeres:03}, it follows that there exists a fixed $h' = h'(\delta)$ such that for the reconstruction problem with $h'$ levels, the distribution of the counts at the leaves can be coupled for all root values except with probability $\delta$. 
We can now  generate $P_{x,h+h'}$ as follows: we first generate $Q_{x,h}$ which is the representations at level $h$.  Then each node at level $h$ is marked as {\em coupled} with probability $1-\delta$ and {\em uncoupled} with probability $\delta$ independently. 
To generate the census $P_{x,h+h'}$ from $Q_{x,h}$,  as follows: for each coupled node at level 
$h$, we generate the census of the leaves below it at level $h + h'$ conditioned on the coupling being successful (note that this census is independent of the representation of the node at level $h$). 
For uncoupled nodes, we generate the census, conditioned on the coupling being unsuccessful. 
From the description 
it is clear that $P_{x,h+h'}$ can be generated from $Q_{x,h}$. 
Since $Q_{x,h}$ has the representation (\ref{eq:count_coupling2}), it now follows that $P_{x,h+h'}$ has the desired representation 
(\ref{eq:count_coupling}) as needed. 

The case of larger $k$ is identical since the chain on $k$ sequences has the same value of $\lambda$. 
Therefore (\ref{eq:count_coupling}) follows from (\ref{eq:count_coupling2}). 

\end{proof}

\subsection{The limited power of shallow algorithms}

We now prove Theorem~\ref{thm:shallow}. 

\begin{proof}
The idea of the proof is to utilize Lemma~\ref{lem:counts} to show that the compressed labeled data is essentially independent of the unlabeled data. 
Write $M = d^{h_0}$. For each permutation $\sigma$ of the $M$ labels $1,\ldots,M$, we write $P_\sigma$ for the induced distribution on the compressed labeled data $C_A(D)$. Our goal is to show we can write 
\begin{equation} \label{eq:couple_perm}
P_{\sigma} = (1-\eta) P + \eta P'_{\sigma}
\end{equation} 
where $\eta$ is small. 
Note that (\ref{eq:couple_perm}) implies that the probability of labeling a unlabeled leaf accurately is at most $M^{-1} + \eta$. 
Indeed we may consider a problem where in addition to the sample from $P_{\sigma}$ we are also told if it is coming from $P$ or from 
$P'_{\sigma}$. If it is coming from $P$, we know it is generated independently of the labels and therefore we cannot predict better than 
random (i.e. $M^{-1}$). 

Let $R(v_1(1)),R(v_2(1)),\ldots,R(v_1(M)),R(v_2(M))$ denote the representations at the roots of the subtrees of the labeled data. 
Let $I$ denote all of the representations and let $P_I$ denote the distribution of $C_A(D)$ conditioned on these representations. By convexity, to prove the desired coupling it suffices to prove  
\[
P_I = (1-\eta) P + \eta P'_{I}
\]
By applying Lemma~\ref{lem:counts} to each of the trees rooted at $v_1(1),v_2(1),\ldots,v_1(M),v_2(M)$ and to each of the sets $A_i$, we obtain the desired results. 

\end{proof}



\begin{thebibliography}{ABGM14}

\bibitem[ABGM14]{ABGM:14}
Sanjeev Arora, Aditya Bhaskara, Rong Ge, and Tengyu Ma, \emph{Provable bounds
  for learning some deep representations}, International Conference on Machine
  Learning, 2014, pp.~584--592.

\bibitem[BM13]{bruna2013invariant}
Joan Bruna and St{\'e}phane Mallat, \emph{Invariant scattering convolution
  networks}, IEEE transactions on pattern analysis and machine intelligence
  \textbf{35} (2013), no.~8, 1872--1886.

\bibitem[Cav78]{Cavender:78}
J.~A. Cavender, \emph{Taxonomy with confidence}, Math. Biosci. \textbf{40}
  (1978), no.~3-4.

\bibitem[CSS16]{CoShSa:16}
Nadav Cohen, Or~Sharir, and Amnon Shashua, \emph{On the expressive power of
  deep learning: A tensor analysis}, 29th Annual Conference on Learning Theory
  (Columbia University, New York, New York, USA) (Vitaly Feldman, Alexander
  Rakhlin, and Ohad Shamir, eds.), Proceedings of Machine Learning Research,
  vol.~49, PMLR, 23--26 Jun 2016, pp.~698--728.

\bibitem[Dar59]{Darwin:59}
Charles Darwin, \emph{On the origin of species}.

\bibitem[ES16]{EldanShamir:16}
Ronen Eldan and Ohad Shamir, \emph{The power of depth for feedforward neural
  networks}, Conference on Learning Theory, 2016, pp.~907--940.

\bibitem[Far73]{Farris:73}
J.~S. Farris, \emph{A probability model for inferring evolutionary trees},
  Syst. Zool. \textbf{22} (1973), no.~4, 250--256.

\bibitem[Fel04]{Felsenstein:04}
J.~Felsenstein, \emph{Inferring phylogenies}, Sinauer, New York, New York,
  2004.

\bibitem[GBC16]{GoBeCo:16}
Ian Goodfellow, Yoshua Bengio, and Aaron Courville, \emph{Deep learning}, MIT
  Press, 2016.

\bibitem[JC69]{JukesCantor:69}
T.~H. Jukes and C.~Cantor, \emph{Mammalian protein metabolism}, Evolution of
  protein molecules (H.~N. Munro, ed.), Academic Press, 1969, pp.~21--132.

\bibitem[JM04]{JansonMossel:04}
S.~Janson and E.~Mossel, \emph{Robust reconstruction on trees is determined by
  the second eigenvalue}, Ann. Probab. \textbf{32} (2004), 2630--2649.

\bibitem[KS66]{KestenStigum:66}
H.~Kesten and B.~P. Stigum, \emph{Additional limit theorems for indecomposable
  multidimensional {G}alton-{W}atson processes}, Ann. Math. Statist.
  \textbf{37} (1966), 1463--1481.

\bibitem[LBH15]{LeBeHi:15}
Yann LeCun, Yoshua Bengio, and Geoffrey Hinton, \emph{Deep learning}, Nature
  \textbf{521} (2015), no.~7553, 436--444.

\bibitem[MLP16]{mhaskar2016learning}
Hrushikesh Mhaskar, Qianli Liao, and Tomaso Poggio, \emph{Learning functions:
  when is deep better than shallow}, arXiv preprint arXiv:1603.00988 (2016).

\bibitem[Mos01]{Mossel:01}
E.~Mossel, \emph{Reconstruction on trees: beating the second eigenvalue}, Ann.
  Appl. Probab. \textbf{11} (2001), no.~1, 285--300.

\bibitem[Mos04]{Mossel:04a}
E.~Mossel, \emph{Phase transitions in phylogeny}, Trans. Amer. Math. Soc.
  \textbf{356} (2004), no.~6, 2379--2404 (electronic).

\bibitem[MP03]{MosselPeres:03}
E.~Mossel and Y.~Peres, \emph{Information flow on trees}, Ann. Appl. Probab.
  \textbf{13} (2003), no.~3, 817--844.

\bibitem[Ney71]{Neyman:71}
J.~Neyman, \emph{Molecular studies of evolution: a source of novel statistical
  problems}, Statistical desicion theory and related topics (S.~S. Gupta and
  J.~Yackel, eds.), 1971, pp.~1--27.

\bibitem[PNB15]{patel2015probabilistic}
Ankit~B Patel, Tan Nguyen, and Richard~G Baraniuk, \emph{A probabilistic theory
  of deep learning}, arXiv preprint arXiv:1504.00641 (2015).

\bibitem[RS15]{RochSly:15}
S.~{Roch} and A.~{Sly}, \emph{{Phase transition in the sample complexity of
  likelihood-based phylogeny inference}}, ArXiv e-prints (2015).

\bibitem[SS03]{SempleSteel:03}
C.~Semple and M.~Steel, \emph{Phylogenetics}, Mathematics and its Applications
  series, vol.~22, Oxford University Press, 2003.

\bibitem[Tel16]{Telgarsky:16}
Matus Telgarsky, \emph{benefits of depth in neural networks}, Conference on
  Learning Theory, 2016, arXiv preprint arXiv:1602.04485, pp.~1517--1539.

\end{thebibliography}

\end{document}